\documentclass[11pt,twocolumn]{article}
\usepackage{bagrow}
\usepackage{mathtools}
\usepackage{enumitem}
\usepackage{microtype}

\newcommand{\norm}[1]{\left\lVert#1\right\rVert}
\DeclareMathOperator{\SiLU}{SiLU}
\newcommand{\R}{\mathbb{R}}
\newcommand{\E}{\mathbb{E}}
\newcommand{\wb}[1][]{\alpha_{#1}}
\newcommand{\ws}[1][]{\beta_{#1}}
\newcommand{\Rcomp}{\mathcal{R}}

\usepackage{comment}
\excludecomment{dead}
\usepackage{amsthm}
\newtheorem{theorem}{Theorem}

\title{KANs need curvature: penalties for compositional smoothness}
\author[1,2,*]{James Bagrow}
\affil[1]{Mathematics \& Statistics, University of Vermont, Burlington, VT, United States}
\affil[2]{Vermont Complex Systems Center, University of Vermont, Burlington, VT, United States}
\affil[*]{\corrauthinfo{james.bagrow@uvm.edu}{bagrow.com}}
\date{}

\begin{document}

\makeatletter
\twocolumn[%
  \maketitle
  \begin{@twocolumnfalse}
  \begin{abstract}\small
Kolmogorov--Arnold networks (KANs) offer a potent combination of accuracy and interpretability, thanks to their compositions of learnable univariate activation functions.
However, the activations of well-fitting KANs tend to exhibit pathologically high-curvature oscillations, making them difficult to interpret, and standard regularization penalties do not prevent this.
Here we derive a basis-agnostic curvature penalty and show that penalized models can maintain accuracy while achieving substantially smoother activations. 
Accounting for how function composition shapes curvature, we prove an upper bound on the full model's curvature relative to the curvature penalty, and use this to motivate richer forms of penalties.
Scientific machine learning is increasingly bottlenecked by the trade-off between accuracy and interpretability. 
Results such as ours that improve interpretability without sacrificing accuracy will further strengthen KANs as a practical tool for both prediction and insight.
  \end{abstract}
  \vspace{1em}
  \end{@twocolumnfalse}
]
\makeatother

\section{Introduction}

Kolmogorov--Arnold networks (KANs)~\cite{liu2024kan} are growing in popularity as an alternative to traditional neural networks.
KANs replace fixed nonlinearities with learnable univariate activations placed on each edge.
This makes them effective in two ways~\cite{somvanshi2025survey}: they are highly accurate and more interpretable than standard deep networks.
When both accuracy and interpretability matter, such as in scientific machine learning, KANs are a natural choice~\cite{liu2025kanmeetscience}.

In practice, KANs often fit data well but develop high-curvature, kink-like oscillations in their activation functions~\cite{samadi2024smooth,yu2024can} (Fig.~\ref{fig:anecdote}).
This is problematic because interpretable activations must be readable, that is, smooth.\footnote{``Smooth'' in the sense of low bending energy (roughness), not continuous differentiability.}
As Liu \textit{et al.}\ remark, ``deeper [more than two layers] representations may bring the advantages of smoother activations''~\cite[\S2.3]{liu2024kan}.
Yet KART~\cite{kolmogorov1957representation,arnold1957functions}, the representation theorem underpinning KANs, offers no smoothness guarantee~\cite{vitushkin1964proof}.
Much recent work has explored KAN architectures~\cite{vaca2024kolmogorov,kiamari2024gkan,yang2024kolmogorov,bagrow2025multi,bagrow2025optimized}, training methods~\cite{rigas2024adaptive,rigas2026deeppikan,rigas2026initialization}, and downstream applications~\cite{panahi2025discovery,panczyk2025opening,bagrow2025softly}, but the question of how to enforce smoothness in KAN activations remains open.

\begin{figure*}
  \centering
  \includegraphics[width=0.85\linewidth]{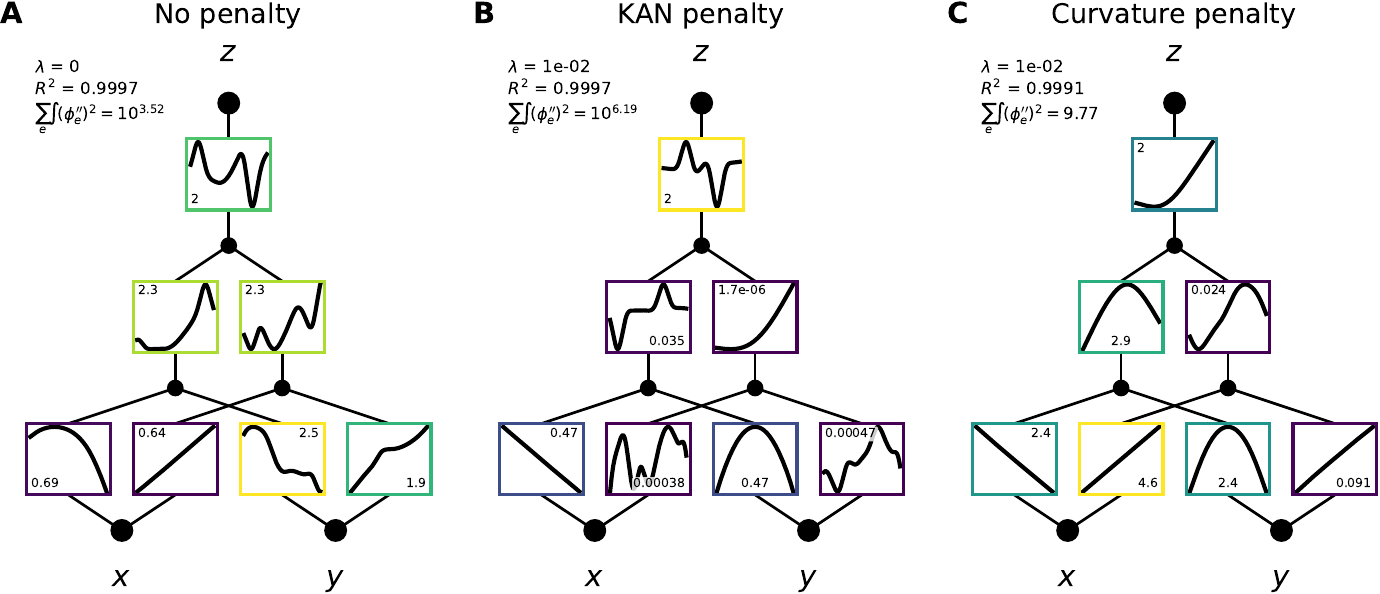}
  \caption{Trained networks on $\sin\left(x+y^2\right)$ over $\left[-2, 2\right]^2$ with grid size $G=10$.
  Despite accurate fits, the unpenalized (\textbf{A}) and KAN-penalized (\textbf{B}) activations exhibit high-curvature, kink-like oscillations, very unlike the true function's components in appearance.
  The curvature-penalized model (\textbf{C}) presents smooth activation functions more closely aligned with the true function's $x$, $y^2$, and $\sin(\cdot)$ components.
  Plot frame color and label indicate the operating range ($\max_z \phi_e - \min_z \phi_e$) of each activation.}
  \label{fig:anecdote}
\end{figure*}

To address this gap, we make the following contributions:
\begin{enumerate}[leftmargin=*]
\item
We show that the standard KAN penalty structurally cannot suppress activation curvature and so cannot guarantee smooth activations (Sec.~\ref{sec:kan-penalty}).

\item
We derive a closed-form edge-wise curvature penalty that is basis-agnostic (Sec.~\ref{sec:activation-curvature}), and demonstrate empirically that it yields substantially smoother KANs at similar accuracy, across function approximation, the Feynman benchmark, and overparameterized regimes (Sec.~\ref{sec:experiments}).

\item
We prove through a compositional analysis that the edge-wise penalty upper-bounds the curvature of the full network (Sec.~\ref{sec:compositional-curvature}), and use this result to motivate richer penalties that draw on more than edge-wise information (Sec.~\ref{sec:richer-penalty}).
\end{enumerate}

Section~\ref{sec:related} positions our work relative to prior P-spline, KAN-curvature, and input-Hessian literature, and Sec.~\ref{sec:discussion} concludes.

\section{KAN's penalty can't enforce smoothness}
\label{sec:kan-penalty}

A KAN of depth $L$ and widths $[n_0, n_1, \ldots, n_L]$ maps an input $x \in \R^{n_0}$ to an output $f(x) \in \R^{n_L}$ via the layer-wise composition
\begin{equation}
\label{eq:kan-forward}
z^{(\ell)}_c \;=\; \sum_{b=1}^{n_{\ell-1}} \phi^{(\ell)}_{cb}\!\left(z^{(\ell-1)}_b\right),
\end{equation}
for $\ell = 1, \ldots, L$ and $c = 1, \ldots, n_\ell$, where $z^{(0)} = x$, $f(x) = z^{(L)}$, and each edge $b \to c$ at layer $\ell$ carries its own learnable univariate activation function $\phi^{(\ell)}_{cb}: \R \to \R$.
The original and most popular KAN form~\cite{liu2024kan} implements activations with a base (residual) function, typically SiLU, and B-splines:
\begin{equation}
\label{eq:phi-def}
\phi(x) \;=\; \wb\, \SiLU(x) \;+\; \ws\, c^\top B(x),
\end{equation}
where $B(x) = (B_1(x), \ldots, B_G(x))^\top$ are the B-spline basis functions on a fixed (uniform, we assume) knot grid of size $G$, $\wb, \ws \in \R$ are scalar weights, and $c \in \R^{G}$ is the learnable spline coefficient vector. 
SiLU is the canonical choice for the base following~\cite{liu2024kan} but any smooth substitute carries through our curvature analysis; non-B-spline bases such as Gaussian-RBF KANs are treated in App.~\ref{sec:beyond-bsplines}.

KANs are trained by minimizing $J(f) + \lambda R(f)$, where $J$ is a data-fitting loss (mean squared error throughout this paper) and $R$ is a penalty with strength $\lambda \ge 0$. Liu \textit{et al.}~\cite{liu2024kan} propose the \textbf{KAN penalty},
\begin{equation}
\label{eq:pykan-reg}
R_\mathrm{KAN}(f) \;=\; \mu_1 \sum_e \big|\phi_e\big|_1 \;+\; \mu_2 \sum_\ell S\!\left(\rho^{(\ell)}\right),
\end{equation}
where the first sum runs over all edges of the network, $|\phi|_1 := \tfrac{1}{N}\sum_{s=1}^{N} |\phi(z^{(s)})|$ is an activation's average magnitude over $N$ training samples, $S(\rho^{(\ell)}) := -\sum_{b, c} \rho^{(\ell)}_{cb} \log \rho^{(\ell)}_{cb}$ is the entropy of the within-layer magnitude distribution, and $\rho^{(\ell)}_{cb} := |\phi^{(\ell)}_{cb}|_1 / \sum_{b',c'} |\phi^{(\ell)}_{c' b'}|_1$ is the normalized magnitude of edge $b \to c$ at layer $\ell$.
The first term shrinks activation magnitudes while the second concentrates each layer's mass onto a few high-magnitude edges.

Figure~\ref{fig:anecdote} shows a KAN trained with no penalty (A) and with the penalty of Eq.~\ref{eq:pykan-reg} (B) on $f(x, y) = \sin(x + y^2)$.
Both have high test accuracy yet many high-curvature, oscillatory activation functions.
Why?

In fact, it is structurally impossible for Eq.~\ref{eq:pykan-reg} to smooth out activations.
To see this, observe that both terms in $R_\mathrm{KAN}$ depend only on the average magnitudes $|\phi^{(\ell)}_{cb}|_1$, which carry no derivative information about $\phi$. 
A wildly oscillating $\phi$ and a smooth $\phi$ with the same average magnitude will incur the same penalty. 
The KAN penalty governs \emph{which} edges carry magnitude, not how it varies across their support.

What is needed instead, and which is well known in the study of splines as \textit{P-splines}~\cite{eilers1996flexible}, is to penalize the curvature of $\phi$ directly, as we discuss in the next section.

\section{Activation function curvature}
\label{sec:activation-curvature}

A natural choice of curvature penalty on a sufficiently smooth function $g$ is its $L^2$ bending energy $\int |\nabla^2 g(x)|_F^2\, dx$, the squared $H^2$ Sobolev seminorm.\footnote{Throughout, ``curvature'' refers to bending energy $\int (\phi'')^2\, dz$, whose integrand $(\phi'')^2$ is the small-slope linearization of squared differential-geometric curvature $(\phi'')^2/(1+\phi'^2)^3$. Bending energy is the standard surrogate elsewhere.}
For a univariate $g$ this becomes $\int (g''(z))^2 \, dz$. 
We compute this term for each KAN edge activation $\phi_e$ on its spline support $\Omega_e \subset \R$, then sum over edges to obtain the curvature penalty $R(f)$, the focus of this paper.
(A penalty of this form has been applied to KANs~\cite{yu2024can}, with crucial differences; see Sec.~\ref{sec:related}.)
This edge-wise penalty is related to the full curvature of $f$ in Sec.~\ref{sec:compositional-curvature}.

Substituting the KAN edge form (Eq.~\ref{eq:phi-def}) and integrating gives
\begin{multline}
\label{eq:phi-double-prime-squared}
\int_{\Omega_e}\!\!\big(\phi_e''(z)\big)^2 \,dz \;=\; \ws[e]^2 \!\!\int\!\! (s_e'')^2\,dz \;+\; \wb[e]^2 \!\!\int\!\! (u'')^2\,dz \\
\;+\; 2\,\ws[e]\,\wb[e] \!\!\int\!\! s_e''\, u''\,dz,
\end{multline}
where $s_e(z) := c_e^\top B(z)$ is the spline component and $u(z) := \SiLU(z)$ is the fixed base. 
We treat the three terms as follows:
\begin{enumerate}[leftmargin=*]%

\item The Eilers--Marx P-spline reduction~\cite{eilers1996flexible} gives $\int_{\Omega_e}\!(s_e''(z))^2\, dz \;\approx\; h_e^{-3}\, \norm{D_2 c_e}^2$ on uniform-knot cubic B-splines, where $D_2$ is the second-difference matrix on the $G$ spline coefficients and $h_e = |\Omega_e|/G$ is the knot spacing. %
We drop the $h_e^{-3}$ prefactor to prevent high-$G$ grids from dominating the base term.

\item  The integral is the constant $K_{\mathrm{silu}} := \int (u'')^2\, dz = (30 + \pi^2)/90 \approx 0.443$; because $u$ is fixed, the resulting $\wb[e]^2 K_{\mathrm{silu}}$ term acts as $L^2$ shrinkage on $\wb[e]$.

\item The cross term is dropped. 
Its magnitude is bounded by the sum of squared terms and is small in practice because $u''$ is localized near $z=0$, but more importantly, keeping it may let the optimizer reduce the penalty by anti-correlating $s_e''$ and $u''$ rather than minimizing curvature.
\end{enumerate}
Combining (1)--(3) and summing over edges, with the scalar $\ws[e]$ folded inside the norm, %
yields our proposed \textbf{curvature penalty}:
\begin{equation}
\label{eq:curv-penalty}
R(f) \;=\; \sum_e \left(\norm{D_2(\ws[e]\, c_e)}^2 + K_{\mathrm{silu}}\, \wb[e]^2\right).
\end{equation}

\paragraph{Remark.}
The proposed penalty has several nice properties:
It works entirely on model coefficients and does not couple to training data.
It has affine functions as its null space.
It admits a clean Bayesian interpretation as a Gaussian prior on second differences.
When applied to B-splines, the term $\norm{D_2 c_e}^2$ is the central object of study in the field of penalized splines (P-splines)~\cite{eilers2010splines,eilers2015twenty}, a rich field with a 30-year history that to the best of our knowledge has gone untouched by the KAN community.

\paragraph{Remark.}
The standard PyKAN package also implements a first-difference penalty on spline coefficients alongside Eq.~\ref{eq:pykan-reg} (disabled by default).
But this also does not work well as a curvature penalty, instead serving as a sparsity prior that concentrates curvature on a small number of knots.
Its null space is constants, not affine functions.
This makes it more tailored to reward piecewise-constant functions than low-curvature ones.

\paragraph{Remark.}
More generally, we can apply our curvature penalty to any fixed basis whose second derivatives are square-integrable: $\int_{\Omega_e}\!(s_e''(z))^2\, dz = c_e^\top M_e\, c_e$ with curvature Gram matrix $(M_e)_{ij} = \int B_i''(z)\, B_j''(z)\, dz$.
We explore a non-B-spline basis for $\phi$ in App.~\ref{sec:beyond-bsplines}.

\paragraph{Remark.}
The curvature penalty works edge-wise. It is not a curvature penalty for the full function $f(x)$ learned by the KAN because it does not capture the composition of activation functions across layers. 
We return to this in Sec.~\ref{sec:compositional-curvature} and show that the edge-wise penalty is an upper bound on the curvature of the full composition.

\section{Lower curvature models with similar accuracy}
\label{sec:experiments}

Here we compare the curvature penalty to no-penalty and KAN-penalty conditions for several tasks commonly addressed by KAN models.
Experimental details are given in Methods (App.~\ref{sec:methods}).

\subsection{KANs for function approximation}

Comparing results across two dimensions, test accuracy (RMSE or $R^2$) and total edge curvature $\sum_e \int (\phi_e''(z))^2\, dz$,
we find that curvature-penalized KANs achieve substantially smoother activation functions at the same or nearly the same fit quality.
For instance, in Fig.~\ref{fig:anecdote}, all KANs fit the target function very well, but the no-penalty and KAN-penalty models do so with solutions that have orders of magnitude more curvature than the curvature-penalized model.
Inspecting the individual $\phi_e$ in the plot, the curvature-penalized KAN most closely resembles the components of the target function, suggesting that the curvature penalty can aid in interpretability and downstream tasks such as symbolic regression.

Figure~\ref{fig:g10-lam-sweep} compares no-penalty and curvature-penalized KANs across a sweep of penalty strengths, for the target $f(x, y) = \exp\left(\sin\left(\pi x\right) + y^2\right)$ studied by Liu \emph{et al.}~\cite{liu2024kan}.
The unpenalized model fits very well, with test RMSE $<10^{-3}$, but the penalized KAN is nearly as accurate over a wide range of $\lambda$ values. 
Even with the relatively low grid size ($G=10$), which itself should act as a smoothness prior, the penalized model achieves total edge curvatures one-third or less that of the unpenalized model.

\begin{figure}
  \centering
  \makeatletter
  \includegraphics[width=\if@twocolumn 0.95\else 0.7\fi\linewidth]{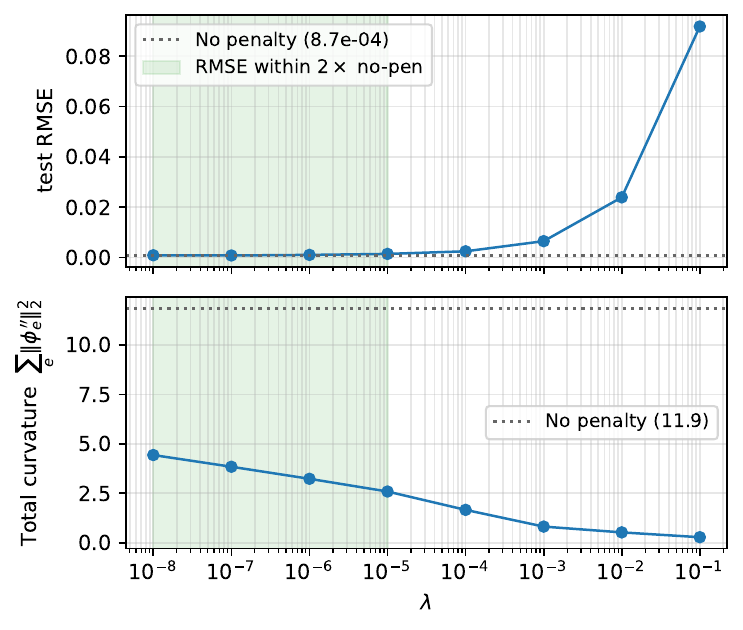}
  \makeatother
  \caption{The penalty selects for a smoother basin among same-fit minima.
  Approximating $f(x, y) = \exp\left(\sin\left(\pi x\right) + y^2\right)$ 
  with architecture $[2,5,1]$ and $G=10$.
  The green shaded region:
  $\lambda$ range where penalized test RMSE is within $2\times$
  the unpenalized baseline (dashed).
  }
  \label{fig:g10-lam-sweep}
\end{figure}

This result---reduced curvature with only a small cost to accuracy---holds broadly. In Table~\ref{tab:feynman} we study the effects of penalties for KANs approximating popular functions taken from the Feynman Equation symbolic regression benchmark.
Again, curvature-penalized KANs are the smoothest models, and are the most accurate (3 of 14 equations) or within no more than a factor of two error of the most accurate model (10 of 14 equations).
\begin{table*}[ht]
  \centering
  \scriptsize
  \setlength{\tabcolsep}{4pt}
  \begin{tabular}{l l ccc c ccc}
    \toprule
    & & \multicolumn{3}{c}{Test RMSE (median)} & & \multicolumn{3}{c}{$\sum_e\!\int(\phi_e'')^2$ (median)} \\
    \cmidrule(lr){3-5} \cmidrule(lr){7-9}
    Feynman eq.\ & Formula & No penalty & KAN & Curvature & & No penalty & KAN & Curvature \\
    \midrule
    I.6.20 & $e^{-\theta^2/(2\sigma^2)} / \sqrt{2\pi\sigma^2}$ & \textbf{0.0015} & \textit{0.0022} & 0.0034 & & $52.7$ & $305$ & $\mathbf{17.7}$ \\
    I.6.20b & $e^{-(\theta-\theta_1)^2/(2\sigma^2)} / \sqrt{2\pi\sigma^2}$ & \textbf{0.0058} & \textit{0.0075} & \textit{0.0070} & & $74.8$ & $107$ & $\mathbf{11.8}$ \\
    I.9.18 & $G m_1 m_2 / \big[(\Delta x)^2 + (\Delta y)^2 + (\Delta z)^2\big]$ & \textbf{0.0022} & 0.0069 & \textit{0.0022} & & $118$ & $1.2\!\cdot\!10^{4}$ & $\mathbf{14.7}$ \\
    I.12.11 & $q\,(E_f + B v \sin\theta)$ & \textit{0.0751} & \textit{0.0720} & \textbf{0.0560} & & $112$ & $99.5$ & $\mathbf{36.6}$ \\
    I.16.6 & $uv / (1 + uv/c^2)$ & \textbf{0.0061} & \textit{0.0075} & \textit{0.0070} & & $179$ & $278$ & $\mathbf{29.0}$ \\
    I.18.4 & $(m_1 r_1 + m_2 r_2) / (m_1 + m_2)$ & \textit{0.0013} & 0.0023 & \textbf{0.0009} & & $60.2$ & $480$ & $\mathbf{29.0}$ \\
    I.26.2 & $\arcsin(n \sin\theta_2)$ & \textbf{0.0094} & \textit{0.0098} & \textit{0.0136} & & $84.7$ & $72.3$ & $\mathbf{19.2}$ \\
    I.29.16 & $\sqrt{x_1^2 + x_2^2 - 2 x_1 x_2 \cos(\theta_1-\theta_2)}$ & \textbf{0.1256} & \textit{0.1527} & \textit{0.1294} & & $390$ & $393$ & $\mathbf{63.1}$ \\
    I.30.3 & $I_0\,\sin^2(n\theta/2) / \sin^2(\theta/2)$ & \textit{0.1787} & \textit{0.1789} & \textbf{0.1763} & & $619$ & $559$ & $\mathbf{153}$ \\
    I.50.26 & $x_1\,[\cos(\omega t) + \alpha\cos^2(\omega t)]$ & \textbf{0.0365} & \textit{0.0367} & \textit{0.0450} & & $194$ & $136$ & $\mathbf{67.9}$ \\
    II.11.27 & $n\alpha\,\epsilon E_f / (1 - n\alpha/3)$ & \textit{0.0085} & \textbf{0.0076} & \textit{0.0084} & & $766$ & $1088$ & $\mathbf{627}$ \\
    II.35.18 & $n_0 / [\,e^{\mu B/k_b T} + e^{-\mu B/k_b T}\,]$ & \textbf{0.0016} & 0.0056 & \textit{0.0018} & & $51.5$ & $2.1\!\cdot\!10^{6}$ & $\mathbf{19.6}$ \\
    III.10.19 & $\mu\,\sqrt{B_x^2 + B_y^2 + B_z^2}$ & \textbf{0.0038} & 0.0081 & \textit{0.0054} & & $2340$ & $2744$ & $\mathbf{258}$ \\
    III.17.37 & $\beta\,(1 + \alpha \cos\theta)$ & \textbf{0.0089} & \textit{0.0095} & \textit{0.0161} & & $95.4$ & $76.3$ & $\mathbf{20.4}$ \\
    \bottomrule
  \end{tabular}
  \caption{Feynman benchmark, $G=10$, $n=5$ seeds, architecture $[d_{\text{in}}, d_{\text{in}}, 1]$. Each penalty is shown at the single $\lambda^*$ that minimizes the geometric mean of test RMSE across all 14 equations: $\lambda^*=0.0001$ for the KAN penalty and $\lambda^*=0.0001$ for the curvature penalty. Winner in each RMSE block bolded per row; non-winners within $2\times$ the winner italicized. Curvature winner bolded per row. The curvature penalty matches or beats the KAN penalty on median RMSE in 9/14 equations and has the lowest total edge curvature in 14/14.}
  \label{tab:feynman}
\end{table*}

\subsection{Curvature penalty stabilizes overparameterized KANs}

When $G$ is large, the KAN activations are over-parameterized and training becomes unstable.
Standard practice has been to use grid extension~\cite{liu2024kan}: start the model at low-$G$ and then progressively increase $G$, each time re-projecting the activations from the old to the new basis using least-squares.
While this can be effective, in part because starting at low $G$ serves as a smoothness inductive bias, ideally one would prefer to spend all of one's training budget on the full resolution model.

Figure \ref{fig:g200-comparison} shows the effects the curvature penalty can have on training stability for over-parameterized KANs.
Compared to the KAN penalty, the curvature penalty achieves lower test RMSE for all values of $\lambda$.
At its optimal $\lambda$, the curvature-penalized KAN is still improving at the end of 3000 Adam epochs, while the KAN penalty model plateaus after $\sim$ 1000 epochs.
While a grid extension run can still achieve lower RMSE~\cite{liu2024kan}, the gap is itself informative: it suggests that much of grid extension's effectiveness comes from the implicit smoothness regularization imposed by its low-$G$ early stages, and that the explicit curvature penalty captures a substantial portion of this benefit at fixed high $G$.

\begin{figure*}[ht]
\begin{center}
\includegraphics[width=0.85\textwidth]{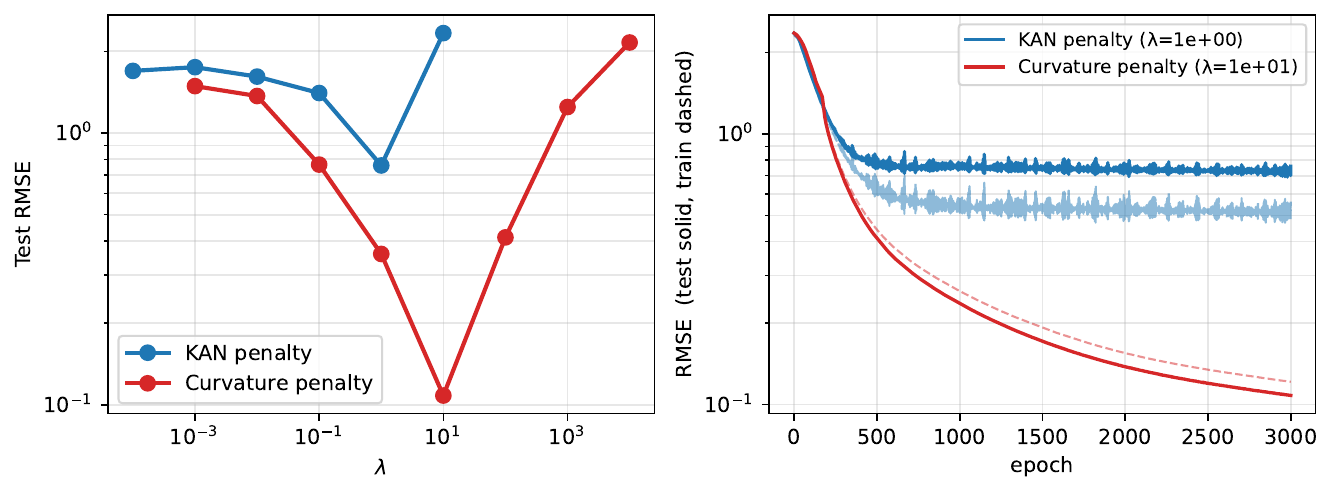}
\caption{
Over-parameterized KANs reach lower test RMSE under the curvature penalty than KAN, across all $\lambda$ values.
Penalty-vs-penalty comparison at high resolution $G=200$, architecture $\left[2, 1, 1\right]$, on the target $f(x, y) = \exp\left(\sin\left(\pi x\right) + y^{2}\right)$. \emph{Left}: final test RMSE as a function of the penalty coefficient $\lambda$. %
\emph{Right}: training trajectories at each penalty's best $\lambda$ (test RMSE solid, train RMSE dashed).
}
\label{fig:g200-comparison}
\end{center}
\end{figure*}

Finally, we study the effects of the penalty for different optimizers and model capacities.
Indeed, the preceding results all focus on Adam (default parameters, no weight decay)~\cite{kingma2014adam}. 
Figure~\ref{fig:adam-lbfgs-g200} shows that the curvature penalty also helps when fitting with L-BFGS~\cite{liu1989limited}, another popular optimizer that is often used to train KANs due to its use of curvature (of the loss) information.
The best combination---wider ($[2,5,1]$) architecture, L-BFGS, and curvature penalty---trains stably at fixed $G=200$ to test RMSE $\sim 10^{-3}$, where unpenalized L-BFGS at the same $G$ fails catastrophically (RMSE $\sim 0.5$).
Likewise, the curvature penalty helps the KAN scale its capacity correctly: going from narrow to wide architectures ($[2,1,1] \to [2,5,1]$) improves fit by $10\times$ (Adam) or $88\times$ (L-BFGS). 
The KAN penalty produces no such benefit.

\begin{figure*}[ht]
  \centering
  \includegraphics[width=0.85\linewidth]{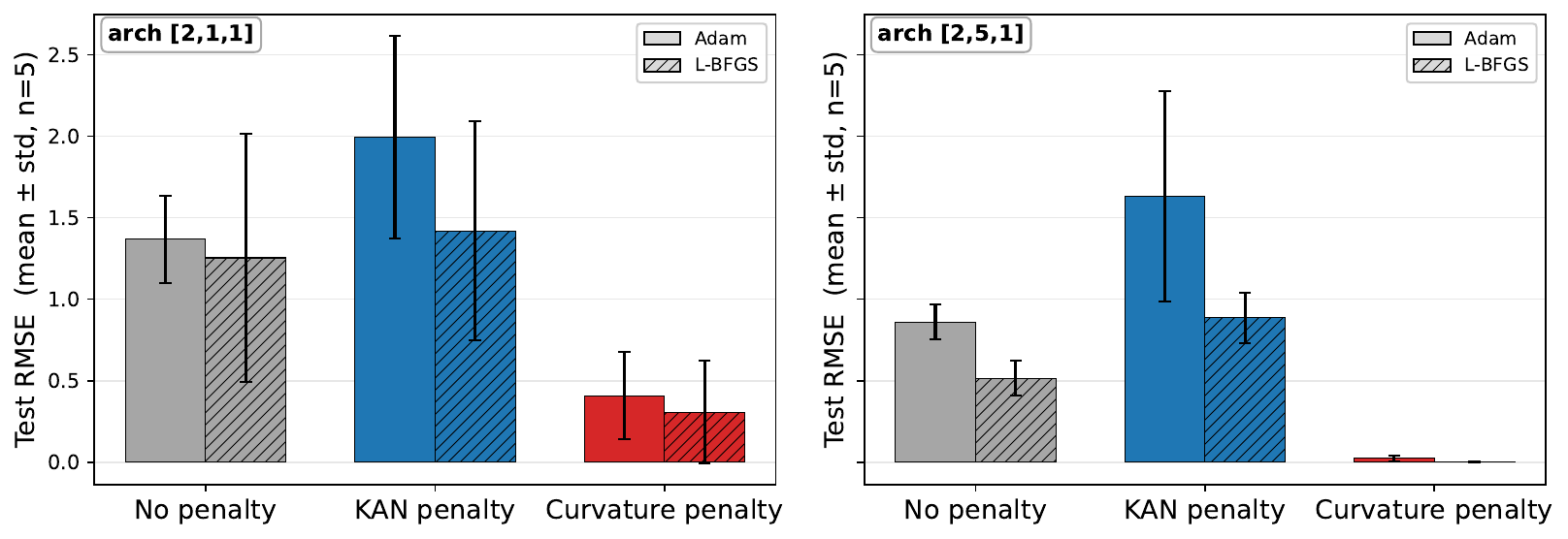}
  \caption{
  Curvature penalty helps multiple optimizers, improving the fit for $f(x, y) = \exp\!\big(\sin(\pi x) + y^2\big)$ at high grid resolution ($G=200$) without grid extension.}
  \label{fig:adam-lbfgs-g200}
\end{figure*}

\section{Compositional curvature}
\label{sec:compositional-curvature}

Despite the empirical effects shown above, an edge-wise curvature penalty (Eq.~\ref{eq:curv-penalty}), which is our focus in this work, is not a curvature penalty for a function composition.
Smoothness does not compose separably because the curvature of the composition depends on the curvatures of every layer through the chain rule, which introduces products of Jacobians.
In this section, we derive this relationship and
\begin{enumerate}[leftmargin=*]%
  \item observe that KAN layer Hessians are diagonal, yielding a simple per-edge chain-rule decomposition;
  \item show that the per-layer penalty is an upper bound on the composition's curvature norm, motivating its use;
  \item identify per-edge chain-rule weights that motivate a richer weighted penalty developed in Sec.~\ref{sec:richer-penalty}.
\end{enumerate}

\subsection{Warmup}
For two scalar functions in composition, $f(x) = \phi^{(2)}(\phi^{(1)}(x))$, the second derivative is
\begin{multline}
\label{eq:scalar-fdb}
f''(x) = \phi^{(2)\prime\prime}(\phi^{(1)}(x))  \left(\phi^{(1)\prime}(x)\right)^2 \\
+ \phi^{(2)\prime}(\phi^{(1)}(x))  \phi^{(1)\prime\prime}(x).
\end{multline}
The first term can give a significant amplification: a smooth $\phi^{(2)}$ applied to a steep (but smooth) $\phi^{(1)}$ can produce a wildly curved composition.
The edge-wise penalty misses this amplification, which is the key qualitative reason a (uniform) edge-wise penalty cannot be a tight curvature penalty on $f$.

\subsection{Tensor formulation}
Now let $\phi^{(1)} : \R^{n_0} \to \R^{n_1}$ and $\phi^{(2)} : \R^{n_1} \to \R^{n_2}$, and $f = \phi^{(2)} \circ \phi^{(1)}$. Use indices $i, j$ for inputs ($\R^{n_0}$), $b$ and $c$ for the intermediates ($\R^{n_1}$), and $a$ for outputs ($\R^{n_2}$). Write $z = \phi^{(1)}(x)$.
Twice differentiating the composition and applying the multivariate chain rule gives
\begin{multline}
\label{eq:two-layer-hessian}
\frac{\partial^2 f_a}{\partial x_i \partial x_j} =
\sum_{b,c} \left.\frac{\partial^2 \phi^{(2)}_a}{\partial z_b \partial z_c}\right|_{\phi^{(1)}(x)} \frac{\partial \phi^{(1)}_b}{\partial x_i} \frac{\partial \phi^{(1)}_c}{\partial x_j} \\
+ \sum_b \left.\frac{\partial \phi^{(2)}_a}{\partial z_b}\right|_{\phi^{(1)}(x)} \frac{\partial^2 \phi^{(1)}_b}{\partial x_i \partial x_j}.
\end{multline}
This (Fa\`a di Bruno for second-order) is the multivariate analog of Eq.~\ref{eq:scalar-fdb}.

Extending to multiple layers, we first write the layerwise Jacobian $J_\ell \in \R^{n_\ell \times n_{\ell-1}}$ and Hessian $H_\ell \in \R^{n_\ell \times n_{\ell - 1} \times n_{\ell - 1}}$:
\begin{align*}%
(J_\ell)_{ab} &=
\frac{\partial \phi^{(\ell)}_a}{\partial z_{\ell - 1, b}}, \\ 
(H_\ell)_{abc} &= \frac{\partial^2 \phi^{(\ell)}_a}{\partial z_{\ell - 1, b} \, \partial
z_{\ell - 1, c}}.
\end{align*}
Then Eq.~\ref{eq:two-layer-hessian} becomes the matrix Hessian
\begin{equation}
\label{eq:two-layer-hessian-matrix}
\nabla^2 f_a = J_1^\top H_{2a} J_1 + \sum_c (J_2)_{ac} H_{1c}.
\end{equation}
(Note that 
both $J_\ell$ and $H_\ell$ are functions of the layer's input $z_{\ell - 1}$, with $z_0 = x$; for brevity we suppress evaluation arguments in the multi-layer expressions.)
To iterate beyond the first two layers, introduce the upstream and downstream Jacobian products:
\begin{align}
U_\ell &= J_L J_{L-1} \cdots J_{\ell + 1}, \quad U_L = I_{n_L}, \\
D_\ell &= J_{\ell - 1} J_{\ell - 2} \cdots J_1, \quad D_1 = I_{n_0}.
\end{align}
Then by induction on $L$, we have
\begin{equation}
\label{eq:depth-L}
 \nabla^2 f_a \;=\; \sum_{\ell=1}^{L} \sum_c (U_\ell)_{ac}\, D_\ell^\top H_{\ell c}\, D_\ell.
 \end{equation}
Each summand contains exactly one Hessian factor; products of Hessians from different layers only appear at derivatives of order $\geq 3$.

\subsection{KAN's layer Hessians are diagonal}
\label{subsec:kan-hessians-diagonal}

A general vector-valued $\phi^{(\ell)}$ has a full Hessian tensor with $n_\ell n_{\ell - 1}^2$ entries. KAN layers, however, take the form
\begin{equation}
\phi^{(\ell)}_a(z) = \sum_b \phi^{(\ell)}_{ab}(z_b),
\end{equation}
where each activation function $\phi^{(\ell)}_{ab}$ is univariate. Consequently, all mixed second partials vanish:
\begin{equation}
(H_\ell)_{abc} = \delta_{bc} \, \phi^{(\ell)\prime\prime}_{ab}(z_b).
\end{equation}
The Hessian is \emph{diagonal} in its two input indices.

Plugging into Eq.~\ref{eq:depth-L} and collapsing the input pair via the delta, and letting $\Phi^{\prime\prime}_{\ell c}$ denote the diagonal matrix with entries $(\Phi^{\prime\prime}_{\ell c})_{bb} = \phi^{(\ell)\prime\prime}_{cb}(z_{\ell - 1, b})$, gives
\begin{equation}
\label{eq:kan-hessian}
\nabla^2 f_a \;=\; \sum_{\ell=1}^{L} \sum_c (U_\ell)_{ac}\, D_\ell^\top \Phi^{\prime\prime}_{\ell c}\, D_\ell.
\end{equation}
This is a substantial simplification over the analogous expansion for an MLP (where the post-activation Hessian through linear layers does not have this diagonal structure). Each term is a product of three transparent factors:
\begin{itemize}[leftmargin=*]
\item $\phi^{(\ell)\prime\prime}_{cb}(z_{\ell - 1, b})$: the pointwise curvature of edge $b \to c$ at layer $\ell$, evaluated at the activation value $z_{\ell - 1, b}$ that the edge sees during the forward pass.
\item $(U_\ell)_{ac}$: the upstream sensitivity from node $c$ at layer $\ell$ to output $a$; a sum over forward paths.
\item $(D_\ell)_{bi} (D_\ell)_{bj}$: the downstream sensitivities from input directions $i$ and $j$ to node $b$ at layer $\ell - 1$; a sum over backward paths, taken twice.
\end{itemize}
We argue that this clean structure is itself a noteworthy structural advantage of KANs over MLPs for principled smoothness analysis: the chain-rule decomposition stays interpretable (see Sec.~\ref{sec:discussion}).

\subsection{The edge-wise penalty is an upper bound}
\label{sec:cs-bound}

Averaging the squared input-Hessian norm over the training distribution gives the composition-level curvature of the function we are fitting:
\begin{equation}
\label{eq:r-comp}
\Rcomp(f) := \E_{x \sim p_{\text{data}}} \norm{\nabla^2 f(x)}_F^2.
\end{equation}
As a stand-alone penalty, however, $\Rcomp$ has two drawbacks: it is coupled to the data distribution, evaluating curvature only at inputs the model has seen and leaving the function unconstrained off-data; and it competes with the loss wherever the target itself has curvature, risking over-restriction where expressivity is needed. 
Even so, $\Rcomp$ can be evaluated directly by autograd-through-Hessian or estimated cheaply with a Hutchinson trace estimator~\cite{hutchinson1990stochastic}---though both require a double backward. 
We instead derive a coefficient-space upper bound from Eq.~\ref{eq:kan-hessian} that decouples the prior from the empirical input measure while preserving the chain-rule structure that ties hidden-layer curvature to the composition Hessian.

Substituting the indexed form of Eq.~\ref{eq:kan-hessian} into $\norm{\nabla^2 f}_F^2 = \sum_{a, i, j} \left((\nabla^2 f)_{aij}\right)^2$ and applying Cauchy--Schwarz over the $E = \sum_\ell n_\ell n_{\ell-1}$ edges, $\left(\sum_e t_e\right)^2 \le E \sum_e t_e^2$, gives
\begin{equation}
\label{eq:cs-bound}
\Rcomp(f) \;\le\; E \sum_{\ell=1}^{L} \sum_{b, c} \E_x \!\left[\phi^{(\ell)\prime\prime}_{cb}(z_{\ell-1, b})^2 \, w^{(\ell)}_{cb}(x)\right],
\end{equation}
where the data-dependent \textit{path weight} is
\begin{equation}
\label{eq:edge-weight}
w^{(\ell)}_{cb}(x) = \sum_a (U_\ell)_{ac}^2 \sum_{i, j} (D_\ell)_{bi}^2 (D_\ell)_{bj}^2.
\end{equation}
Each edge $b \to c$ at layer $\ell$ contributes its squared curvature, weighted by the squared upstream sensitivity to the outputs and the squared downstream sensitivities to the inputs.

The right-hand side of Eq.~\ref{eq:cs-bound} reduces to a coefficient-space penalty under three structural assumptions. For each edge $e$, let $\Omega_e \subset \R$ be the grid range.
\begin{itemize}[leftmargin=*]%
\item \textbf{(A1)} \emph{Bounded coefficient of variation of the path weight:} $\sigma_{w_e}/\bar w_e \le \kappa$ for every edge $e$, where $\bar w_e := \E_x[w_e(x)]$.
\item \textbf{(A2)} \emph{Bounded density on each spline support:} the density of activation inputs reaching edge $e$ under $x \sim p_{\text{data}}$ is bounded, $p_e(z) \le C/|\Omega_e|$ pointwise on $\Omega_e$, with $C \ge 1$.
\item \textbf{(A3)} \emph{Knot spacing bounded by 1:} $h_e := |\Omega_e|/G \le 1$ for every edge $e$.
\end{itemize}
A1 and A2 are properties of the chain Jacobians and the input distribution, not of training, and both are post-hoc verifiable per edge. A3 is a mild architectural condition---it holds whenever the grid resolution is at least as fine as the input domain extent, which is the regime of any practically-trained KAN.

We now prove this section's main result:

\begin{theorem}[Edge-wise penalty bounds composition curvature]
\label{thm:cs-bound}
Suppose (A1)--(A3). Then
\begin{equation}
\label{eq:perEdge-upperbound}
\Rcomp(f) \;\le\; K_\lambda \, R(f),
\end{equation}
where
\begin{gather*}
K_\lambda := 2 E C \left(\max_e \gamma_e\right)\left(\max_e \bar w_e\right) \, \frac{G^3}{(\min_e |\Omega_e|)^4}, \\
\gamma_e := 1 + \kappa\, \sigma_x({\phi_e''}^2)/\E_x[{\phi_e''}^2].
\end{gather*}
\end{theorem}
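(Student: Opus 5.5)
Starting from the Cauchy--Schwarz bound Eq.~\ref{eq:cs-bound}, the argument handles each edge term $\E_x[{\phi_e''}^2 w_e]$ in three stages: decorrelate the path weight (A1), convert the data expectation into a Lebesgue integral over $\Omega_e$ (A2), and bound that integral by the coefficient-space penalty (A3 together with the P-spline reduction of Sec.~\ref{sec:activation-curvature}).

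\emph{Step 1: decoupling the path weight.} I write $\E_x[{\phi_e''}^2 w_e] = \E_x[{\phi_e''}^2]\,\bar w_e + \mathrm{Cov}({\phi_e''}^2, w_e)$. Cauchy--Schwarz on the covariance gives $|\mathrm{Cov}| \le \sigma_x({\phi_e''}^2)\,\sigma_{w_e}$, and (A1) gives $\sigma_{w_e} \le \kappa \bar w_e$. Therefore
\[
\E_x[{\phi_e''}^2 w_e] \le \bar w_e\, \E_x[{\phi_e''}^2]\Bigl(1+\kappa\,\sigma_x({\phi_e''}^2)/\E_x[{\phi_e''}^2]\Bigr) = \gamma_e \bar w_e\,\E_x[{\phi_e''}^2].
\]
Taking maxima over edges pulls out $(\max_e\gamma_e)(\max_e \bar w_e)$.

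\emph{Step 2: from data measure to Lebesgue measure.} The expectation $\E_x[{\phi_e''}(z_{\ell-1,b})^2]$ equals $\int_{\Omega_e} {\phi_e''}^2 p_e\,dz$, since activation inputs lie in the grid range. By (A2) this is at most $(C/|\Omega_e|)\int_{\Omega_e}{\phi_e''}^2\,dz \le (C/\min_e|\Omega_e|)\int_{\Omega_e}{\phi_e''}^2\,dz$.

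\emph{Step 3: to coefficient space.} I expand the integral as in Eq.~\ref{eq:phi-double-prime-squared}. The cross term is bounded via $2|ab|\le a^2+b^2$, which yields $\int{\phi_e''}^2 \le 2\bigl(\ws[e]^2\int (s_e'')^2 + \wb[e]^2 K_{\mathrm{silu}}\bigr)$; this is the source of the factor $2$. For the spline part I use the P-spline identity $\int (s_e'')^2 \lesssim h_e^{-3}\norm{D_2 c_e}^2$, with $h_e^{-3} = G^3/|\Omega_e|^3 \le G^3/(\min_e|\Omega_e|)^3$. The SiLU term carries no $h_e^{-3}$ factor, but (A3) gives $h_e\le 1$, hence $h_e^{-3}\ge 1$, so I may multiply that term by $h_e^{-3}$ as well and factor out a common $h_e^{-3}$. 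This step should be stated carefully, treating the Eilers--Marx relation as an inequality with constant one; if needed, I would invoke the exact Gram form $c_e^\top M_e c_e$ and note that $M_e \preceq h_e^{-3} D_2^\top D_2$ for uniform cubic B-splines. Combining the factors gives $\int{\phi_e''}^2 \le 2\,G^3(\min_e|\Omega_e|)^{-3}\bigl(\norm{D_2(\ws[e] c_e)}^2 + K_{\mathrm{silu}}\wb[e]^2\bigr)$. The extra $1/\min_e|\Omega_e|$ from Step 2 then produces the fourth power.

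\emph{Step 4: assembly.} Summing over edges with the factor $E$ from Eq.~\ref{eq:cs-bound} gives $\Rcomp(f)\le K_\lambda R(f)$. The main obstacle is Step 3: making the approximate P-spline relation rigorous as an upper bound, i.e.\ controlling the Gram matrix of cubic B-spline second derivatives by $h^{-3}D_2^\top D_2$. I would try first the fact that $s_e''$ is piecewise linear with knot values $h^{-2}(D_2c)_k$, and compute its $L^2$ norm on each interval exactly, which is bounded by the sum of squares of its endpoint values times $h$.
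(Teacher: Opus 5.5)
Your proposal is correct and follows the paper's own proof essentially step for step: the covariance decomposition with Cauchy--Schwarz under (A1), lifting to $\int_{\Omega_e}$ via (A2), Young's inequality for the factor $2$, the Gram bound $M_e \preceq h_e^{-3} D_2^\top D_2$, and (A3) to place the SiLU term under the common $h_e^{-3}$. Your per-interval route to the Gram bound is the paper's Gershgorin/partition-of-unity argument written out explicitly. One caution: use $\tfrac{h}{3}(a^2+ab+b^2) \le \tfrac{h}{2}(a^2+b^2)$ on each interval rather than $h(a^2+b^2)$, because interior knot values are shared by two intervals and the looser bound would double the constant in $K_\lambda$.
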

\begin{proof}
Decompose $\E_x[w_e \phi_e''^2] = \bar w_e\, \E_x[\phi_e''^2] + \mathrm{Cov}_x(w_e, \phi_e''^2)$.
Bound the covariance by Cauchy--Schwarz under (A1). 
Lift the data expectation to a function-space integral under (A2).
Separate SiLU and spline terms with Young's inequality.
Bound the basis curvature Gram via Gershgorin and the partition-of-unity identity ($M_e \preceq h_e^{-3} D_2^\top D_2$), then apply (A3) to absorb $h_e^{-3}$. 
Together:
\begin{align*}
\Rcomp(f)
&\overset{\mathrm{(A1)}}{\le} E \sum_e \bar w_e\, \E_x[\phi_e''(z_e)^2]\, \gamma_e \\
&\overset{\mathrm{(A2)}}{\le} E \sum_e \frac{C\,\gamma_e\,\bar w_e}{|\Omega_e|}\!\int_{\Omega_e}\!\phi_e''(z)^2\, dz \\
&\;\,\le \;\,E \sum_e \frac{2 C\,\gamma_e\, \bar w_e}{|\Omega_e|}\!\big[\ws[e]^2\, c_e^\top M_e\, c_e + K_{\mathrm{silu}}\, \wb[e]^2\big] \\
&\overset{\mathrm{(A3)}}{\le} K_\lambda \sum_e \big[\norm{D_2(\ws[e]\, c_e)}^2 + K_{\mathrm{silu}}\, \wb[e]^2\big] \\
&\;\,=\;\, K_\lambda \, R(f).
\end{align*}
\end{proof}
Theorem~\ref{thm:cs-bound} establishes that the edge-wise penalty $R(f)$ from Sec.~\ref{sec:activation-curvature} upper-bounds the composition curvature $\Rcomp(f)$ up to a multiplicative constant.  
Operationally, $K_\lambda$ folds into the regularization weight $\lambda$ at training time, so minimizing $R(f)$ minimizes a rigorous upper bound on $\Rcomp(f)$.

\paragraph{Remark.}
Ideally, a function-space smoothness prior should not depend on the training distribution. 
Equation~\ref{eq:r-comp} does, in three ways exposed by the proof of Theorem~\ref{thm:cs-bound}: a joint per-sample product $\phi_e''^2 \, w_e$, a marginal density of $z_e$ on $\Omega_e$, and a marginal per-edge importance $\bar w_e$. 
The first lets the optimizer game the penalty by anti-correlating $w_e$ and $\phi_e''^2$ pointwise rather than reducing curvature. 
The second makes the prior a property of the function's behavior \emph{on the data} rather than its shape on $\Omega_e$, so the activation can be arbitrarily rough on unvisited regions of $\Omega_e$. 
The third weights each edge's curvature contribution by its average path weight under the data, making the prior's per-edge balance a function of $p_{\text{data}}$; it drops via $\bar w_e \le \max_e \bar w_e$. 
This final coupling is the mildest of the three, only a per-edge re-weighting rather than a gameable joint or a blind density. Does retaining $\bar w_e$ inside the sum, which targets the edges whose curvature most impacts the composition, give a tighter, more informative penalty than $R(f)$ in practice? We study this empirically in Sec.~\ref{sec:richer-penalty}.

\section{A richer curvature penalty}
\label{sec:richer-penalty}

The curvature penalty (Eq.~\ref{eq:curv-penalty}), by working edge-wise, ignores compositionality.
Our derivation of the upper bound reveals a simple affordance for incorporating some of the lost information:
retain the $\bar w_{e=(\ell, b, c)} = \E_{x}\!\big[w^{(\ell)}_{cb}(x)\big]$ terms and use them to compute a weighted edge-wise penalty, in which each edge's contribution scales with its expected impact on the composition Hessian via the chain rule.
The one downside to this approach is that it reintroduces the data into the penalty, as the weighting term takes the expectation over the training data. 
But if that is acceptable, the diagonal structure of KAN Hessians (Sec.~\ref{subsec:kan-hessians-diagonal}) makes this attractive:
\begin{gather}
w^{(\ell)}_{cb}(x) = \norm{D_{\ell, b, :}(x)}^4 \, \norm{U_{\ell, :, c}(x)}^2, \label{eq:w-e-explicit} \\
R_{\bar w}(f) = \sum_e \bar w_e \left( \norm{D_2(\ws[e]\, c_e)}^2 + K_{\mathrm{silu}}\, \wb[e]^2 \right). \label{eq:weighted-penalty}
\end{gather}

To test this richer penalty, we do a 10-seed comparison between weighted and uniform edge-wise penalties, first selecting the best $\lambda$ for each method as the penalty scales will differ.
As Fig.~\ref{fig:weighted-vs-uniform} shows, the weighted penalty outperforms the uniform penalty in 9 of 10 seeds, yielding a $2.2\times$ lower mean test RMSE ($0.0119 \pm 0.0086$ vs.\ $0.0260 \pm 0.0172$; paired Wilcoxon $p = 0.014$).
\begin{figure}[ht]
  \centering
  \makeatletter
  \includegraphics[width=\if@twocolumn 0.9\else 0.65\fi\linewidth]{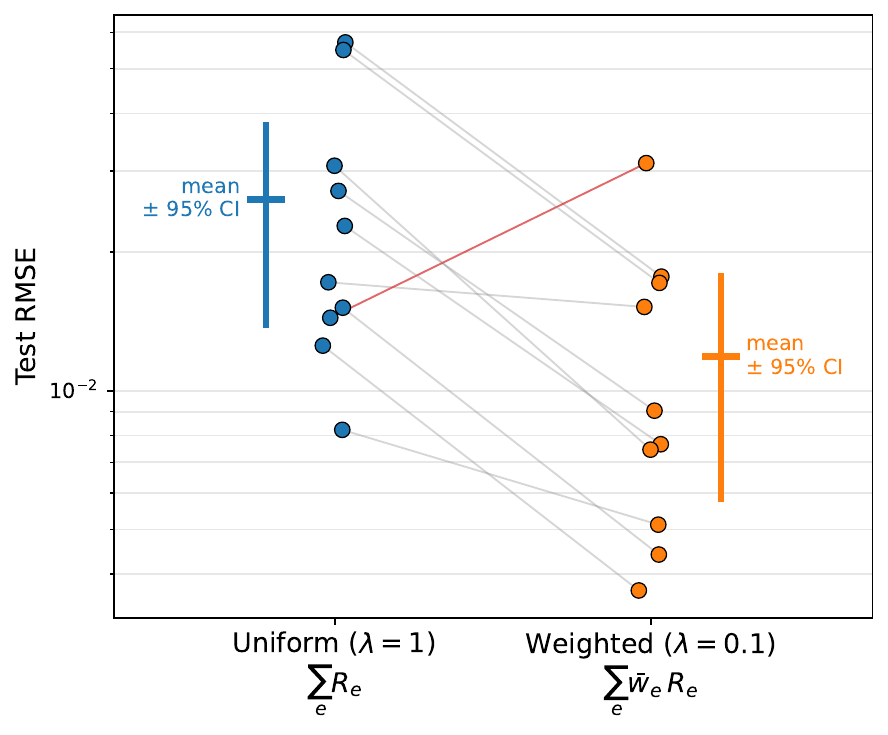}
  \makeatother
  \caption{Weighted curvature penalty more than halves the mean test RMSE of uniform, $n=10$ seeds at $[2,5,1]$, $G=200$ on target $f(x,y) = \exp(\sin(\pi x) + y^2)$.
  The value of $\lambda$ for each penalty was identified by a single-seed sweep over $\lambda \in \{10^{-3}, 10^{-2}, 10^{-1}, 1, 10, 10^{2}, 10^{3}\}$ followed by a fill at the minimum. 
  CI bars at the sides.}
  \label{fig:weighted-vs-uniform}
\end{figure}

\section{Related work}
\label{sec:related}

\paragraph{Penalized splines.} The Eilers--Marx P-spline framework~\cite{eilers1996flexible, eilers2010splines, eilers2015twenty} has 30 years of development including REML-based smoothing parameter selection, mixed-model duality, GAMs with P-splines, tensor-product P-splines for multidimensional smoothing, Bayesian P-splines, and shape-constrained variants. None of this work concerns nested or compositional spline structures.

\paragraph{KAN curvature.}
Two recent papers reach for second-derivative penalties on KANs without connecting to the penalized-spline literature. Zheng et al.~\cite{zheng2025frkan} state an objective involving second derivatives of spline coefficients (their eq.~8), but as published the expression is a signed sum without integration or squaring and is not a curvature penalty; we were also unable to match it to their released code. 
Cang et al.~\cite{yu2024can}, in contrast, also pursue an $L^2$ curvature penalty. 
However, their formulation penalizes curvature of the spline component only, not $\phi$, and they do not connect to compositional curvature. 
This leaves the SiLU contribution unconstrained, so $\alpha_e$ can absorb arbitrary curvature over the composition with no penalty cost and the quantity does not bound $\Rcomp$. 
Our derivation makes these issues explicit, and the form $\norm{D_2(\ws[e]\, c_e)}^2 + K_{\mathrm{silu}}\, \wb[e]^2$ closes the gap by penalizing the full activation. 
Rigas \textit{et al.}~\cite{rigas2026idf} introduce an adaptive grid method that places spline knots at high-curvature locations, but do not propose a penalty.
To the best of our knowledge, no prior KAN work cites Eilers and Marx or draws from the broader penalized-spline literature.

\paragraph{Deep-net first-order smoothness.} Drucker and LeCun's ``double backpropagation''~\cite{drucker1992double} introduced an input-Jacobian penalty. 
Subsequent work includes contractive autoencoders~\cite{rifai2011contractive}, Sobolev training~\cite{czarnecki2017sobolev}, and a large literature on Lipschitz constraints~\cite{anil2019sorting, miyato2018spectral, virmaux2018lipschitz, gouk2021regularisation}. 
The Lipschitz-of-a-composition-as-product-of-layer-Lipschitz idea is the first-order analog of our second-order chain-rule decomposition. 
Recent work~\cite{liu2022lipschitz,mcginnis2026beyond} explores depth-dependent or non-uniform layer-wise Lipschitz allocation.

\paragraph{Input-Hessian penalty: function vs.\ loss curvature.} Curvature penalties based on input-Hessians have been pursued for objectives quite different from ours, and on a different mathematical object than the one our framework targets. Both CURE~\cite{moosavi2019robustness} and CRR~\cite{curvaturerate2025} penalize the input-Hessian of the \emph{loss}, $\nabla_x^2 \mathcal{L}(f(x), y)$: CURE via finite-difference Hessian-vector products in the gradient direction --- motivated by adversarial robustness, since the worst-case perturbation $\delta$ maximizes the second-order Taylor term $\delta^\top \nabla_x^2 \mathcal{L}\, \delta$ --- and CRR via a curvature-rate measure of the loss landscape for sharpness-aware learning. 

We instead penalize the input-Hessian of the \emph{function output}, $\nabla_x^2 f$, which controls the smoothness of the learned function itself, not the response of the loss to input perturbation. The two are mathematically distinct: for regression with mean-squared error, $\nabla_x^2 \mathcal{L} = 2(f - y)\, \nabla_x^2 f + 2\, \nabla_x f\, \nabla_x f^\top$ contains both a function-Hessian piece and an input-Jacobian outer product, and at a perfect fit ($f = y$) reduces to pure first-order sensitivity --- carrying no second-order information about $f$. The objectives also differ: CURE/CRR want $f$ insensitive to input perturbations regardless of what the data demands; our setting wants $f$ to fit smooth structure while preserving the chain rule's role in composing roughness from smooth atomic edges. None of these works derives a chain-rule decomposition of $\nabla_x^2 f$ to motivate per-layer penalty design for compositional architectures.

\paragraph{Loss-Hessian on parameters (sharpness-aware methods).} A third, distinct object is the Hessian of the loss with respect to the \emph{parameters} $\theta$. Hessian-trace penalties~\cite{liu2023hessian, crsam2024} penalize $\nabla_\theta^2 \mathcal{L}$ for sharpness-aware generalization (flat minima as a generalization proxy). This serves a separate purpose from the input-derivative penalties above and should not be confused with them.

\section{Discussion}
\label{sec:discussion}

Do we want smooth activations when training KANs?
If all we care about is predictive accuracy, it doesn't matter what the inner functions look like so long as their composition fits the data.
Vitushkin's theorem~\cite{vitushkin1964proof} in fact shows that smooth representations need not exist; Samadi \textit{et al.}~\cite{samadi2024smooth} discuss its implications for KANs. 
The univariate inner functions are generically non-smooth even for a smooth target function.
Smoothness, then, is a modeling choice.
Visualizing the activations makes KANs more interpretable than alternatives such as MLPs, and we argue that smooth activations (or at least activations as smooth as possible to fit the data) are the right prior for maximally interpretable KANs.
Interpretability may be valuable enough to justify a small accuracy cost.
KANs open the black box of deep neural networks, and we should press this advantage.

The curvature penalty stabilizes overparameterized KAN training at fixed high $G$, without grid extension. A practical consequence is that KAN optimization remains a single end-to-end differentiable run, rather than a multi-stage curriculum with discrete grid changes and optimizer reinitializations.
This matters whenever KANs compose with broader systems, such as joint training with other components, neural architecture search, and meta-learning, all of which assume a single optimization problem with stable gradient flow.
Online or streaming settings, where one cannot revisit an early low-$G$ phase, likewise rule out grid extension.
While multi-stage grid extension remains effective, the curvature penalty makes single-stage training a viable alternative.

A single structural property of KANs underlies both interpretability and our smoothness analysis: layer outputs are sums of univariate edges, which makes layer Hessians diagonal. 
This supports both per-edge inspectability (the visualization-based interpretability that motivates KANs) and per-edge attribution of compositional curvature (the principled smoothness penalty we develop here). 
MLPs have neither property: their dense layers entangle weights and nonlinearities so that curvature emerges from interactions that make attribution difficult.
Smoothness is not a feature added to KANs but a property their architecture invites us to control.

Several limitations of our results merit future work.
The focus on the edge-wise curvature penalty means variants should be explored more fully, beyond the initial work given in Sec.~\ref{sec:richer-penalty}. 
In particular, it remains open how much of the compositional structure should be folded into the penalty.
Another open issue is the treatment of grid spacing and grid updates. 
We focused on a fixed, uniform grid of knots (constant $h_e$), yet variable grid spacing, which is needed for techniques such as adaptive grids, requires a penalty that accounts for non-uniform $h_e$.
The penalized-splines literature has also considered penalty norms besides the $L^{2}$ form we use.
Eilers \emph{et al.} note~\cite{eilers2015twenty} that an $L^{1}$ penalty on second differences can promote piecewise-linear splines over globally smooth ones. 
Given the success of ReLU, this kink-tolerant variant may prove effective in KANs.
On the theoretical side, our treatment leaves room for improvement: tighter bounds on the compositional curvature, for instance, may point toward novel penalties.

In summary, the curvature penalty gives KAN training a single, principled smoothness lever that promotes interpretability, stabilizes high-resolution fitting, and respects the network's compositional structure. 
Smoothness-aware KAN design is now a tractable problem rather than an aspiration.
For an architecture whose appeal rests on interpretability, such advancements may further cement KANs as a credible foundation for interpretable scientific machine learning.

{\footnotesize
\bibliographystyle{IEEEtran}
\bibliography{refs}
}

\appendix

\section{Methods}
\label{sec:methods}

All experiments use PyTorch v2.5.1~\cite{paszke2019pytorch} on CPU. %
FastKAN ablation in App.~\ref{sec:beyond-bsplines} uses the original implementation~\cite{li2024fastKAN}.
Every B-spline KAN edge uses cubic ($k=3$) splines on a fixed, uniform knot grid of size $G$, with a SiLU base. %
Each experiment draws $n_{\text{train}} = 1024$ training and $n_{\text{test}} \in \{256, 1024\}$ test inputs uniformly from the target's domain $\Omega \subset \R^d$, and computes targets in closed form. New samples are drawn for each random seed.
Where reported, best $\lambda$ is selected by a single-seed scout sweep over a range adjusted per penalty type, then re-evaluated with multiple seeds at the winner.
Total curvature $\sum_e \int_{\Omega_e} \phi_e''(z)^2\, dz$ is evaluated by trapezoidal integration on a dense grid ($n=4096$) covering each edge's spline support, with $\phi_e''$ computed via the analytic B-spline second-derivative identity for the spline term and the closed-form SiLU second derivative for the base.

\paragraph{Optimization.}
Adam: %
$\text{lr}=10^{-3}$, no weight decay, $(\beta_1, \beta_2)=(0.9, 0.999)$, $\epsilon=10^{-8}$. Trained for 3000 epochs with a 200-epoch warmup phase (no penalty during warmup), batch size 256 (64 for the smaller $[2,2,1,1]$ anecdote setup). %
L-BFGS: $\text{lr}=1.0$, $\text{max\_iter}=20$ inner iterations per outer step, $\text{history\_size}=100$, strong-Wolfe line search, $\text{tolerance\_grad}=10^{-9}$, $\text{tolerance\_change}=10^{-12}$. 
Trained for 500 outer steps without warmup on the full training batch. %

\section{Beyond B-splines}
\label{sec:beyond-bsplines}

\begin{figure*}
\begin{center}
\includegraphics[width=\textwidth]{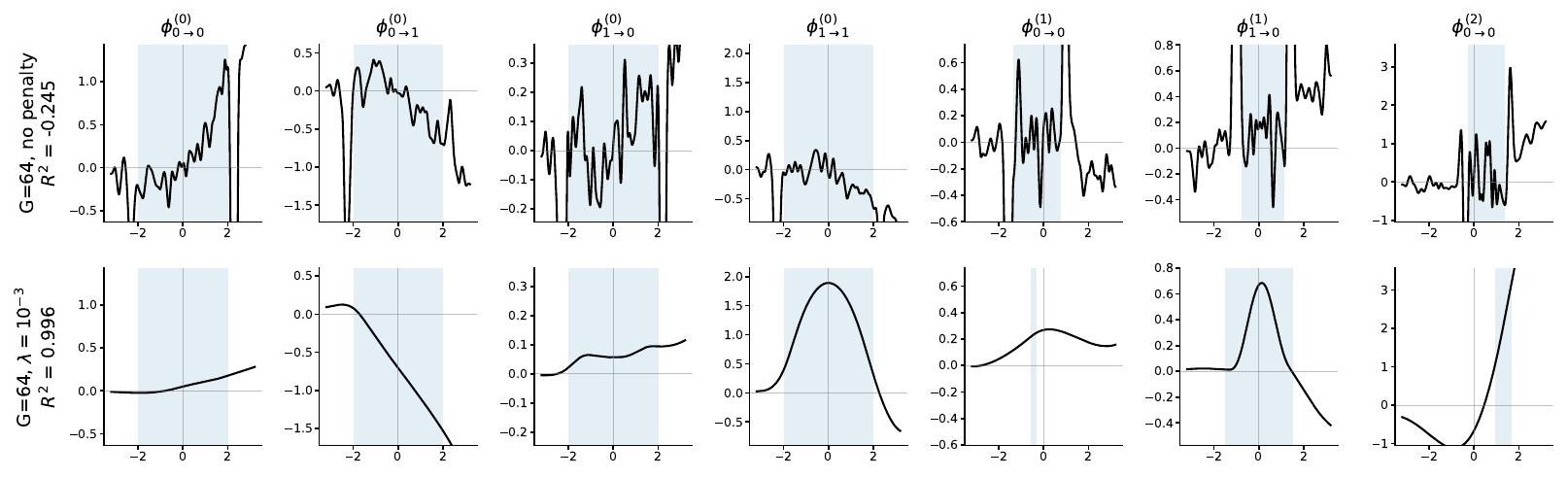}
\caption{Activation functions for high resolution FastKAN trained on $f(x, y) = \sin\left(x + y^{2}\right)$ over $\left[-2, 2\right]^{2}$ with architecture $\left[2, 2, 1, 1\right]$ and Gaussian-RBF basis size $G=64$. \emph{Top}: no penalty, $R^2 = -0.24$. 
\emph{Bottom}: same configuration with our curvature penalty (Eq.~\ref{eq:fastkan-penalty}) at $\lambda = 10^{-3}$, $R^2 = 0.996$. 
The penalty makes the overparameterized regime trainable, exactly as in the B-spline setting.
Shaded regions denote the range of activation inputs.
}
\label{fig:fastkan}
\end{center}
\end{figure*}

The curvature penalty is basis-agnostic.
To see an example beyond B-splines, here we briefly consider an RBF basis as implemented by FastKAN~\cite{li2024fastKAN}.
FastKAN uses a Gaussian radial basis function with uniform centers,
\begin{equation}
\label{eq:fastkan-basis}
B_g(x) = \exp\!\left(-\left(\frac{x - \mu_g}{h}\right)^2\right), \;\;
\mu_g = \mu_0 + g h,
\end{equation}
with bandwidth $h$ equal to the center spacing. 
Each activation has the form
\begin{equation}
\label{eq:fastkan-edge}
\phi(x) \;=\; \sum_g c_g B_g(x) \;+\; \wb\, \mathrm{SiLU}(x),
\end{equation}
with no separate spline scale.
For a Gaussian RBF with equispaced centers the curvature Gram matrix has closed form:
  \begin{equation}
    M_{ij} \;=\; \frac{\sqrt{\pi/2}}{h^3}\, e^{-\Delta^2/2}\, \left(\Delta^4 - 6\Delta^2 + 3\right),
  \end{equation}
where $\Delta = j - i$. 
The diagonal entries are $M_{ii} = 3\sqrt{\pi/2}/h^3$ while the off-diagonal entries decay as a Gaussian in $\Delta$, making $M$ effectively banded.

For the full edge function (Eq.~\ref{eq:fastkan-edge}), we have curvature
\begin{equation}
\label{eq:fastkan-full}
\int \left(\phi''\right)^2 dx \;=\; c^\top M\, c \;+\; 2\, \wb\, c^\top k \;+\; \wb^2\, K_{\mathrm{silu}},
\end{equation}
where $k$ has entries $k_g = \int B_g''(x)\, \mathrm{SiLU}''(x)\, dx$ and 
$K_{\mathrm{silu}} \approx 0.443$ as before. 
Here $k$ admits no fully elementary closed form but can be precomputed numerically. 
However, as before, we drop the cross-terms. ($\mathrm{SiLU}''$ is a bump localized near zero, so $\norm{k}$ is small except for those $\mu_g$ that overlap the bump, and the cross term is bounded by $2\left|\wb\right| \norm{c} \norm{k}$.)
The penalty for FastKAN comes from summing over all edges $(i, j)$ in all layers:
\begin{equation}
\label{eq:fastkan-penalty}
R_{\text{FK}}
= \sum_\ell \sum_{i, j} \left[ \left(c^{(\ell)}_{ij}\right)^\top M\, c^{(\ell)}_{ij} \;+\; K_{\mathrm{silu}}\, \left(\alpha^{(\ell)}_{ij}\right)^2 \right],
\end{equation}
 where $c_{ij}^{(\ell)} \in \R^G$ is the basis-coefficient vector for edge $(i,j)$ in layer $\ell$.

We applied our penalty to a FastKAN\footnote{We disable FastKAN's optional LayerNorm in both conditions: it is incompatible with the singleton $1 \to 1$ layer in $\left[2, 2, 1, 1\right]$, and its learnable affine introduces a gauge symmetry that our edge-wise curvature analysis does not account for.} training run with high resolution $G=64$ on the target $f(x, y) = \sin\left(x + y^{2}\right)$ over $\left[-2, 2\right]^{2}$ with architecture $\left[2, 2, 1, 1\right]$, and visualized the activation functions in Fig.~\ref{fig:fastkan}.
The penalized model had notably smoother activation functions and was substantially more accurate ($R^2 = 0.996$ vs.\ $-0.24$).
Figure~\ref{fig:fastkan}, together with our prior results, demonstrates that the curvature penalty is effective across multiple KAN bases.

\end{document}